\newtheorem{defn}{Definition}
\newtheorem{prop}[defn]{Proposition}
\providecommand{\R}{\ensuremath \mathbb{R}}
\providecommand{\N}{\ensuremath \mathbb{N}}
\newcommand{\weights}{\mathcal{W}}
\newcommand{\biases}{\mathcal{B}}
\newcommand{\Zin}{Z}
\newcommand{\colval}{v}
\newcommand{\inset}{X\idxo}
\newcommand{\usset}{X\unsf}
\newcommand{\outset}{X\idxdepth}
\newcommand{\inzono}{Z\idxo}
\newcommand{\outzono}{Z\idxdepth}
\newcommand{\uszono}{Z\unsf}
\newcommand{\regtext}[1]{\mathrm{\textnormal{#1}}}
\newcommand{\defemph}[1]{\emph{#1}}
\newcommand{\ts}[1]{\textsuperscript{#1}}
\newcommand{\vc}[1]{\mathbf{#1}}
\newcommand{\opt}{^*}
\newcommand{\zeros}{\mathbf{0}}
\newcommand{\ones}{\mathbf{1}}
\newcommand{\eye}{\mathbf{I}}
\newcommand{\Wt}{\vc{W}}
\newcommand{\bias}{\vc{w}}
\newcommand{\xv}{\vc{x}}
\newcommand{\yv}{\vc{y}}
\newcommand{\ctr}{\vc{c}}
\newcommand{\ntuple}{\vc{u}}
\newcommand{\Gen}{\vc{G}}
\newcommand{\Acon}{\vc{A}}
\newcommand{\bcon}{\vc{b}}
\newcommand{\coef}{\vc{z}}
\newcommand{\slackeq}{\vc{n}}
\newcommand{\slackineq}{\vc{m}}
\newcommand{\norm}[1]{\left\Vert#1\right\Vert}
\newcommand{\diag}[1]{\regtext{diag}\!\left(#1\right)}
\newcommand{\union}{\bigcup}
\newcommand{\trans}{^\top}
\newcommand{\conzono}[1]{\mathcal{CZ}\!\left(#1\right)}
\newcommand{\relu}[1]{\varrho\!\left(#1\right)}
\newcommand{\nn}[1]{\vc{n}\!\left(#1\right)}
\newcommand{\linlayer}[1]{\mathcal{L}\!\left(#1\right)}
\newcommand{\Lobj}[1]{\ell_{\regtext{obj}}\!\left(#1\right)}
\newcommand{\Lcon}[1]{\ell_{\regtext{con}}\!\left(#1\right)}
\newcommand{\unsf}{_\regtext{unsf}}
\newcommand{\ncon}{{n_{\regtext{con}}}}
\newcommand{\ngen}{{n_{\regtext{gen}}}}
\newcommand{\nout}{{n_{\regtext{out}}}}
\newcommand{\ndata}{{n_{\regtext{data}}}}
\newcommand{\depth}{d}
\newcommand{\idx}[1]{^{(#1)}} 
\newcommand{\idxi}{\idx{i}}
\newcommand{\idxo}{\idx{0}}
\newcommand{\idxdepth}{\idx{\depth}}
\title{\LARGE \bf
Constrained Feedforward Neural Network Training\\
via Reachability Analysis
}
\author{Long Kiu Chung*, Adam Dai*, Derek Knowles, Shreyas Kousik, and Grace X. Gao
\thanks{* indicates equal contribution.
All authors are with Stanford University, Stanford, CA.
L.K. Chung is with the Department of Mechanical Engineering. 
A. Dai is with the Department of Electrical Engineering. 
D. Knowles, S. Kousik, and G.X. Gao are with the Department of Aeronautics and Astronautics. 
Corresponding author: \texttt{gracegao@stanford.edu}.
}
}
\begin{document}

\maketitle
\thispagestyle{plain}
\pagestyle{plain} 

\begin{abstract}
Neural networks have recently become popular for a wide variety of uses, but have seen limited application in safety-critical domains such as robotics near and around humans.
This is because it remains an open challenge to train a neural network to obey safety constraints.
Most existing safety-related methods only seek to verify that already-trained networks obey constraints, requiring alternating training and verification.
Instead, this work proposes a \textit{constrained} method to simultaneously train and verify a feedforward neural network with rectified linear unit (ReLU) nonlinearities.
Constraints are enforced by computing the network's output-space reachable set and ensuring that it does not intersect with unsafe sets; training is achieved by formulating a novel collision-check loss function between the reachable set and unsafe portions of the output space.
The reachable and unsafe sets are represented by constrained zonotopes, a convex polytope representation that enables differentiable collision checking.
The proposed method is demonstrated successfully on a network with one nonlinearity layer and $\approx 50$ parameters.
\end{abstract}

\section{Introduction}

Neural networks are a popular method for approximating nonlinear functions, with increasing applications in the field of human-robot interactions.
For example, the kinematics of many elder-care robots \cite{xiong2007development, ko2017neural}, rehabilitation robots \cite{xu2009adaptive, hussain2013adaptive}, industrial robot manipulators \cite{gribovskaya2011motion}, and automated driving systems \cite{tran2020nnv, shengbo2019key} are controlled by neural networks.
Thus, verifying the \textit{safety} of the neural networks in these systems, before deployment near humans, is crucial in avoiding injuries and accidents.
However, it remains an active area of research to ensure the output of a neural network satisfies user-specified constraints and requirements.
In this short paper, we take preliminary steps towards safety via constrained training by representing constraints as a collision check between the reachable set of a neural network and unsafe sets in its output space.


\subsection{Related Work}

Many different solutions have been proposed for the \textit{verification} problem, with set-based reachability analysis being the most common for an uncertain set of inputs \cite{liu2019algorithms}.
Depending on one's choice of representation, the predicted output is either exact (e.g. star set \cite{tran2019star, tran2020nnv}, ImageStar \cite{tran2020verification}) or an over-approximation (e.g. zonotope \cite{althoff2010reachability}) of the actual output set.
Reachability is most commonly computed layer-by-layer, though methods have been proposed that speed up verification by, e.g., using an anytime algorithm to return unsafe cells while enumerating polyhedral cells in the input space \cite{vincent2020reachable}, or recursively partitioning the input set via shadow prices \cite{rubies2019fast}.

Verification techniques have several drawbacks.
First, they do not provide feedback about constraints during training, so one must alternate training and verification until desired properties have been achieved.
Furthermore, verification by over-approximation can often be inconclusive, while exact verification can be expensive to compute.

Several alternative approaches have therefore been proposed.
For example, \cite{huang2015bidirectional} employs a constrained optimization layer to use the output of the network as a potential function for optimization while enforcing constraints.
Similarly, \cite{stewart2017label, xu2018semantic} adds a constraint violation penalty to the objective loss function and penalizes violation of the constraint.
These methods augment their networks with constrained optimization, but are unable to guarantee constraint satisfaction upon convergence of the training.
Alternatively, \cite{cruz2021safe} uses a systematic process of small changes to conform a ``mostly-correct'' network to constraints.
However the method only works for networks with a Two-Level Lattice (TLL) architecture, requires an already-trained network, and again does not guarantee a provably safe solution.
Finally, \cite{markolf2021polytopic} attempts to learn the optimal cost-to-go for the Hamilton–Jacobi–Bellman (HJB) equation, while subjected to constraints on the output of the neural network controller.
Yet, it does not actually involve any network training and is unable to handle uncertain input sets.

Recently, constrained zonotopes have been introduced as a set-based representation that is closed under linear transformations and can exactly represent any convex polytope \cite{scott2016constrained, raghuraman2020set}. 
Importantly, these sets are well-suited for reachability analysis due to analytical, efficient methods for computing Minkowski sums, intersections, and collision checks; in particular, collision-checking only requires solving a linear program.
We leverage these properties to enable our contributions.

\begin{figure}[t]
    \centering
    \includegraphics[width=0.99\columnwidth]{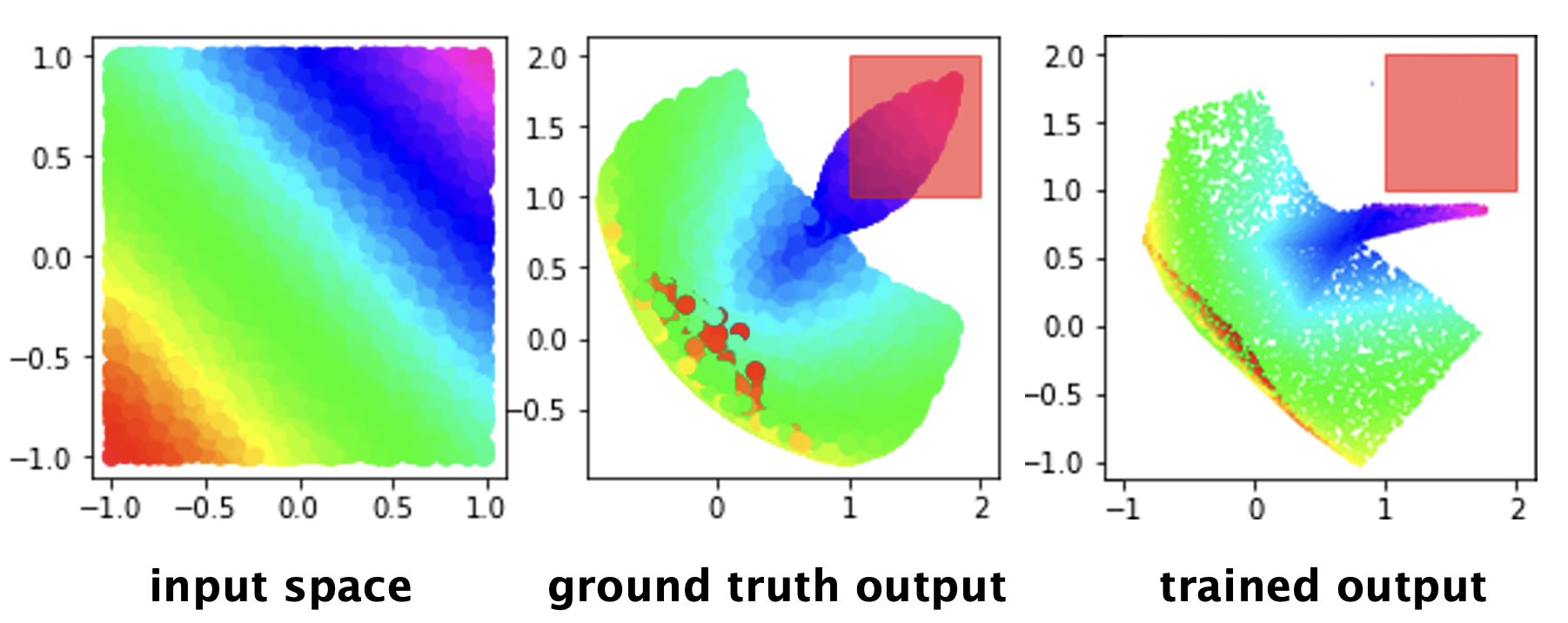}
    \caption{An example safe training result with the proposed method.
    The color gradient illustrates corresponding input and output points.
    With our method, the trained output does not intersect the unsafe set (red box).}
    \label{fig:method_overview}
    \vspace*{-0.4cm}
\end{figure}

\subsection{Contributions}
We propose a method to compute the output of a neural network with rectified linear unit (ReLU) activations given an input set represented as constrained zonotopes.
We then enforce performance by training under a differentiable zonotope intersection constraint, which guarantees safety upon convergence.
Our method is demonstrated on a small numerical example, and illustrated in Fig. \ref{fig:method_overview}.

\section{Preliminaries}\label{sec:preliminaries}

We now introduce our notation for neural networks and define constrained zonotopes.


In this work, we consider a fully-connected, ReLU-activated feedforward neural network $\nn{\cdot}: \inset \to \R^{n\idxdepth}$, with output $\xv\idxdepth = \nn{\xv\idxo}$ given an input $\xv\idxo \in \inset \subset \R^{n\idxo}$.
We call $\inset$ the input set.
We denote by $\depth \in \N$ the \defemph{depth} of the network and by $n\idxi$ the \defemph{width} of the $i$\ts{th} layer.
For each layer $k = 1,\cdots,\depth-1$, the hidden state of the neural network is given by
\begin{align}
    \xv\idx{k} = \relu{\linlayer{\xv\idx{k-1}, \Wt\idx{k-1}, \bias\idx{k-1}}},
\end{align}
where $\Wt\idx{k-1} \in \R^{n\idx{k-1}\times n\idx{k}}$, $\xv\idx{k}$ and $\bias\idx{k-1} \in \R^{n\idx{k}}$, and
\begin{align}
    \linlayer{\xv, \Wt, \bias} &= \Wt\xv + \bias,\\
    \relu{\xv} &= \max\{0,\xv\},
\end{align}
where $\linlayer{\cdot}$ is a linear layer operation, and $\relu{\cdot}$ is the ReLU nonlinearity with the max taken elementwise.
We do not apply the ReLU activation for the final output layer:
\begin{align}
    \xv\idxdepth = \linlayer{\xv\idxdepth, \Wt\idxdepth, \bias\idxdepth}.
\end{align}
The \defemph{reachable set} of the neural network is
\begin{align}\label{eq:nn_reach_set}
    \outset = \nn{\inset} \subset \R^{n\idxdepth}.
\end{align}

We represent the reachable set as a union of constrained zonotopes.
A \defemph{constrained zonotope} $\conzono{\ctr,\Gen,\Acon,\bcon} \subset \R^n$ is a set parameterized by a center $\ctr \in \R^n$, generator matrix $\Gen \in \R^{n\times \ngen}$, linear constraints $\Acon \in \R^{\ncon\times\ngen}$, $\bcon \in \R^\ncon$, and coefficients $\coef \in \R^\ngen$ as follows:
\begin{align}\label{eq:con_zono}
    \conzono{\ctr,\Gen,\Acon,\bcon} = \left\{\ctr + \Gen\coef\ |\ \norm{\coef}_\infty \leq 1,\ \Acon\coef = \bcon \right\}.
\end{align}

Importantly, the intersection of constrained zonotopes is also a constrained zonotope \cite[Proposition 1]{scott2016constrained}.
Let $Z_1 = \conzono{\ctr_1,\Gen_1,\Acon_1,\bcon_1}$ and $Z_2 = \conzono{\ctr_2,\Gen_2,\Acon_2,\bcon_2}$.
Then $Z_1 \cap Z_2$ is given by
\begin{align}\label{eq:conzono_intersection}
    Z_1 \cap Z_2 = \conzono{\ctr_1, [\Gen_1, \zeros], \begin{bmatrix}
            \Acon_1 & \zeros\\
            \zeros & \Acon_2\\
            \Gen_1 & -\Gen_2
        \end{bmatrix}, \begin{bmatrix}
            \bcon_1 \\
            \bcon_2 \\
            \ctr_2 - \ctr_1
        \end{bmatrix}}.
\end{align}
We leverage this property to evaluate constraints on the forward reachable set of our neural network.
\section{Method}\label{sec:method}

In this section, we first explain how to pass a constrained zonotope exactly through a ReLU nonlinearity; that is, we compute the reachable set of a ReLU activation given a constrained zonotope as the input.
We then discuss how to train a neural network using the reachable set to enforce constraints.
Finally, we explain how to compute the gradient of the constraint for backpropagation.

Before proceeding, we briefly mention that we can pass an input constrained zonotope $\Zin = \conzono{\ctr,\Gen,\Acon,\bcon}$ through a linear layer as
\begin{align}
    \linlayer{\Zin, \Wt, \bias} = \conzono{\Wt\ctr + \bias,\Wt\Gen,\Acon,\bcon}.
\end{align}
This follows from the definition in \eqref{eq:con_zono}.

\subsection{Constrained Zonotope ReLU Activation}

\begin{prop}\label{prop:conzono_relu_activation}
The ReLU activation of a constrained zonotope $\Zin \subset \R^n$ is:
\begin{align}
    \relu{\Zin} = \union_{i=1}^{2^n} \conzono{\ctr\idxi,\Gen\idxi,\Acon\idxi,\bcon\idxi},
\end{align}
where each output constrained zonotopes is given by:
\begin{subequations}\label{eq:relu_activation_params}
\begin{align}
    \Gen\idxi &= [\diag{\ntuple_i}\Gen,\ \zeros_{n\times n}],\\
    \Acon\idxi &= \begin{bmatrix}
            \Acon & \zeros_{\ncon\times n} \\
            \diag{\ones_{n\times 1} - 2\ntuple_i}\Gen & \diag{\vc{d}\idxi}
        \end{bmatrix},\\
    \ctr\idxi &= \diag{\ntuple_i}\ctr,\\
    \bcon\idxi &= \begin{bmatrix}
            \bcon \\
            -\diag{\ones_{n\times 1} - 2\ntuple_i}\ctr - \vc{d}\idxi
        \end{bmatrix},\regtext{and}\\
    \vc{d}\idxi &= \tfrac{1}{2}\left(\Gen_+\ones_{\ngen\times 1} - \diag{\ones_{n\times 1} - 2\ntuple_i}\ctr\right),
\end{align}
\end{subequations}
where $\Gen_+ \in \R^{n\times\ngen}$ is a matrix containing the elementwise absolute value of $\Gen$ and $\ntuple_i \in \R^{n\times1}$ is the $i$\ts{th} combination of the $2^n$ possible $n$-tuples defined over the set $\left\{0,1\right\}^n$.
\end{prop}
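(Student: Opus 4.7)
The plan is to write ReLU as a union of $2^n$ affine pieces indexed by the sign pattern of its argument, intersect the input constrained zonotope with the corresponding orthant in each case, and then apply the associated linear map. For each $\ntuple_i \in \{0,1\}^n$, the ReLU coincides with the linear map $\xv \mapsto \diag{\ntuple_i}\xv$ on the closed orthant $O_i = \{\xv \in \R^n : (1-2u_{i,j}) x_j \leq 0 \text{ for all } j\}$, since $u_{i,j} = 1$ selects the non-negative half where ReLU is the identity and $u_{i,j} = 0$ selects the non-positive half where ReLU vanishes. Because the $2^n$ orthants cover $\R^n$, it follows that $\relu{\Zin} = \bigcup_{i=1}^{2^n} \diag{\ntuple_i}(\Zin \cap O_i)$, and it suffices to show that each summand equals the constrained zonotope given in \eqref{eq:relu_activation_params}.

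Next, I would construct $\Zin \cap O_i$ as a constrained zonotope by intersecting $\Zin$ with the $n$ halfspaces defining $O_i$ in parallel. Each halfspace $(1-2u_{i,j}) x_j \leq 0$ is lifted in the standard way by introducing a slack generator $\sigma_j \in [-1, 1]$ together with the equality constraint $(1-2u_{i,j})(c_j + \Gen_j \coef) + d_{i,j}(1 + \sigma_j) = 0$, where $d_{i,j}$ is chosen as the worst-case bound $\tfrac{1}{2}((\Gen_+\ones)_j - (1-2u_{i,j}) c_j)$ obtained from $|\Gen_j\coef| \leq (\Gen_+\ones)_j$ under $\norm{\coef}_\infty \leq 1$. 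Stacking these $n$ slacks as zero columns on the state side and the $n$ new equations as rows yields exactly the block structure of $\Acon\idxi$, $\bcon\idxi$, and $\vc{d}\idxi$ in the statement. Finally, applying $\diag{\ntuple_i}$ multiplies the center and the original generator block componentwise while leaving the slack columns at zero (since the slacks appear only in the constraint rows), producing $\ctr\idxi = \diag{\ntuple_i}\ctr$ and $\Gen\idxi = [\diag{\ntuple_i}\Gen, \zeros]$ as claimed.

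The main obstacle I foresee is not conceptual but bookkeeping: I need to verify the sign conventions that tie $\ntuple_i$, $(\ones - 2\ntuple_i)$, and the ReLU pattern consistently, and to confirm that $d_{i,j}$ is simultaneously large enough that every point of $\Zin \cap O_i$ admits a valid $\sigma_j \in [-1, 1]$ and small enough that no spurious $\xv \notin \Zin$ slips into the lifted set. Both follow from the fact that the $\ell_1$ bound on $\Gen_j\coef$ is an over-estimate of its true range over $\Zin$ (which the equality constraints $\Acon\coef = \bcon$ can only tighten), and from the observation that the new equation forces $(1-2u_{i,j})(c_j + \Gen_j \coef) \in [-2d_{i,j}, 0]$ precisely when $\sigma_j \in [-1,1]$ and $d_{i,j}\geq 0$, so the lifted set neither adds points outside the halfspace nor omits points inside it.
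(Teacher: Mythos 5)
Your proposal is correct and follows essentially the same route as the paper's proof, which also decomposes $\relu{\cdot}$ into the $2^n$ orthant cases, intersects $\Zin$ with the corresponding halfspaces (citing the halfspace-intersection formula of Raghuraman and Scott rather than re-deriving the slack-generator construction as you do), and then applies $\diag{\ntuple_i}$ to zero out the inactive coordinates. Your version is simply a more explicit working-out of that two-step argument, including the verification that $\vc{d}\idxi$ is the tight worst-case bound; the only caveat worth noting is that, exactly as with the cited formula, exactness of the halfspace step relies on $d_{i,j}\geq 0$ (otherwise the true intersection is empty and the lifted set should be discarded by the emptiness check).
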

\begin{proof}
The formulation in \eqref{eq:relu_activation_params} follows from treating the $\max$ operation applied to all negative elements of the input zonotope as a sequence of two operations.
\textit{First}, we intersect the input constrained zonotope with the halfspace defined by the vector $\ntuple_i$ in the codomain of $\relu{\cdot}$; this is why the linear operator $\diag{\ntuple_i}$ is applied to each $\Gen\idxi$ and $\ctr\idxi$, as given by the analytical intersection of a constrained zonotope with a halfspace \cite[Eq. 10]{raghuraman2020set}.
\textit{Second}, we zero out the dimension corresponding to that halfspace/unit vector (i.e., project all negative points to zero).
Since the max is taken elementwise, there are $2^n$ possible intersection/zeroings when considering each dimension as either activated or not.
\end{proof}
\noindent Proposition \ref{prop:conzono_relu_activation} is illustrated in Fig. \ref{fig:conzon_relu_example}.

Per Proposition \ref{prop:conzono_relu_activation}, passing a constrained zonotope through a ReLU nonlinearity produces a set of $2^n$ constrained zonotopes.
A similar phenomenon is found in ReLU activations of other set representations \cite{tran2020verification}, with exponential growth in the computational time and memory required as a function of layer width and number of layers.
To mitigate this growth, empty constrained zonotopes can be pruned after each activation, hence our next discussion.

\begin{figure}[ht]
    \centering
    \includegraphics[width=0.7\columnwidth]{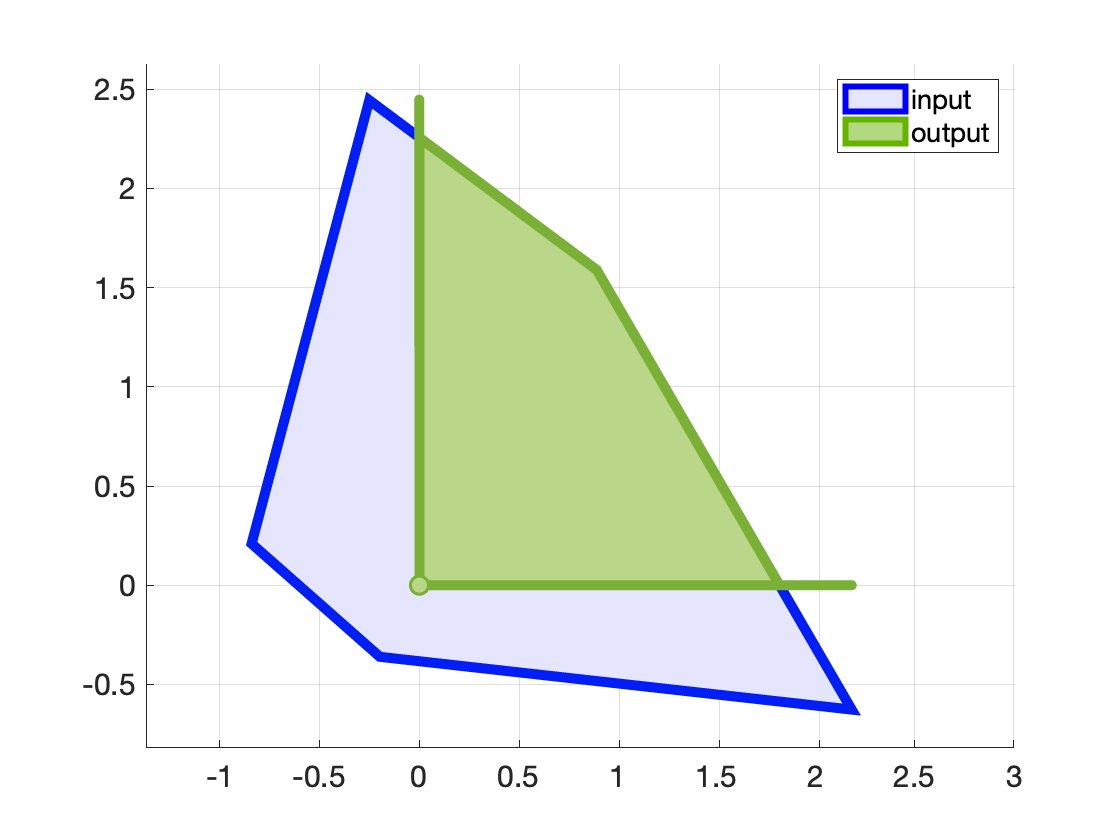}
    \caption{An illustration of passing a constrained zonotope (blue) through a 2-D ReLU nonlinearity, resulting in the green output set, which is the union of 4 constrained zonotopes.}
    \label{fig:conzon_relu_example}
    \vspace*{-0.5cm}
\end{figure}

\subsection{Constrained Zonotope Emptiness Check}

To check if $\Zin = \conzono{\ctr,\Gen,\Acon,\bcon}$ is empty, we solve a linear program (LP) \cite[Proposition 2]{scott2016constrained}:
\label{prog:emptiness_check}
\begin{align}
    \min_{\coef, \colval} \left\{\colval\ |\ \Acon\coef = \bcon\ \regtext{and}\ \norm{\coef}_\infty \leq \colval\right\}.
\end{align}
Then, $\Zin$ is empty if and only if $\colval > 1$.
Importantly, by construction, as long as there exist feasible $\coef$ (for which $\Acon\coef = \bcon$), then \eqref{prog:emptiness_check} is \textit{always} feasible.
Since the intersection of constrained zonotopes is also a constrained zonotope as in \eqref{eq:conzono_intersection}, we can use this emptiness check to enforce collision-avoidance (i.e., non-intersection) constraints.
This is the basis of our constrained training method.


\subsection{Constrained Neural Network Training}

The main goal of this paper is constrained neural network training.
For robotics in particular, as future work, our goal is to train a robust controller.
In this work, we consider an unsafe output set which could represent, e.g., actuator limits or obstacles in a robot's workspace (in which case the output of the neural network is passed through a robot's dynamics).

\subsubsection{Generic Formulation}
Consider an input set $\inset \subset \R^{n\idxo}$ represented by a constrained zonotope, an unsafe set $\usset \subset \R^{n\idxdepth}$, a training dataset $(\xv\idxo_j, \yv_j)$, $j = 1, \cdots, m$, $\yv_j \in \R^{n\idxdepth}$ of training examples $\xv\idxo_j$ and labels $\yv_j$, and an objective loss function $\Lobj{\xv\idxo_1, \cdots, \xv\idxo_m, \yv_1, \cdots, \yv_m}$.
Let $\weights = \{\Wt\idxo, \cdots, \Wt\idxdepth\}$ be the collection of all of the neural network weights and $\biases = \{\bias\idxo, \cdots, \bias\idxdepth\}$ all the biases.
We formulate the training problem as:
\begin{subequations}\label{prog:training_problem}
\begin{align}
    \min_{\weights,\biases}\quad
    &\Lobj{\xv\idxo_1, \cdots, \xv\idxo_m, \yv_1, \cdots, \yv_m}, \\
    \regtext{s.t.}\quad
    &\outset \cap \usset = \emptyset,
\end{align}
\end{subequations}
where $\outset$ is the reachable set as in \eqref{eq:nn_reach_set}.
We write the loss as a function of all of the input/output data (as opposed to batching the data) for ease of presentation.

\subsubsection{Set and Constraint Representations}

We represent the input set and unsafe set as constrained zonotopes, $\inset = \inzono$ and $\usset = \uszono$.
Similarly, it follows from Proposition \ref{prop:conzono_relu_activation} that the output set $\outset$ can be exactly represented as a union of constrained zonotopes:
\begin{align}
    \outset = \union_{i=1}^{\nout} \outzono_i,
\end{align}
where $\nout$ depends on the layer widths and network depth.

Recall that $\outzono_i \cap \uszono$ is a constrained zonotope as in \eqref{eq:conzono_intersection}.
So, to compute the constraint loss, we evaluate $\outset \cap \usset$ by solving \eqref{prog:emptiness_check} for each constrained zonotope $\outzono_i \cap \uszono$ with $i = 1, \cdots$.
Then, denoting $\colval\opt_i$ as the output of \eqref{prog:emptiness_check} for each $\outzono_i \cap \uszono$, we represent the constraint $\outset \cap \usset = \emptyset$ as a function $\Lcon{\cdot}$ for which
\begin{align}\label{eq:constraint_loss}
    \Lcon{\outzono_i,\uszono} = 1 - \colval\opt_i,
\end{align}
which is negative when feasible as is standard in constrained optimization \cite{nocedal2006numerical}.
Using \eqref{eq:constraint_loss}, we ensure the neural network obeys constraints by checking $\Lcon{\outzono_i,\uszono} < 0$ for each $i = 1,\cdots,\nout$.



\subsection{Differentiating the Collision Check Loss}

To train using backpropagation, we must differentiate the constraint loss $\Lcon{\cdot}$.
This means we must compute the gradient of \eqref{prog:emptiness_check} with respect to the problem parameters $\Acon$ and $\bcon$, which are defined by the centers, generators, and constraints of the output constrained zonotope set.
To do so, we leverage techniques from \cite{amos2017optnet}, which can be applied because \eqref{prog:emptiness_check} is always feasible.

Consider the Lagrangian of \eqref{prog:emptiness_check}:
\begin{align}
    J(\coef,v,\slackineq,\slackeq) = q\trans(\coef,v) + \Acon\trans\slackeq + \slackineq\trans(\vc{G}(\coef,v) - \vc{g}),
\end{align}
where $\slackineq$ is the dual variable for the inequality constraint and $\slackeq$ is the dual variable for the equality constraint.
For any optimizer $(\coef\opt,v\opt,\slackineq\opt,\slackeq\opt)$, the optimality conditions are
\begin{subequations}\label{eq:KKT_conds_for_coll_check_LP}
\begin{align}
    \vc{q}\trans(\coef\opt,v\opt) + \Acon\trans\slackeq\opt + \vc{G}\trans\slackineq\opt &= 0, \\
    \Acon(\coef\opt,v\opt)\opt - \bcon &= 0,\ \regtext{and}\\
    \diag{\slackineq\opt}\vc{G}(\coef\opt,v\opt)\opt &= 0,
\end{align}
\end{subequations}
where we have used the fact that $\vc{g} = \zeros$.
Taking the differential (denoted by $d$) of \eqref{eq:KKT_conds_for_coll_check_LP}, we get
\begin{align}\begin{split}
\label{eq:differential_of_opt_conds}
    \begin{bmatrix}
        \zeros &\vc{G}\trans & \Acon\trans \\
        \diag{\slackineq\opt} & \diag{\vc{G}(\coef\opt,v\opt)\opt} & \zeros \\
        \Acon & \zeros & \zeros
    \end{bmatrix}\begin{bmatrix}
        d(\coef\opt,v\opt) \\
        d\slackineq \\
        d\slackeq
    \end{bmatrix} =\\
    \begin{bmatrix}
        -d\vc{q} - d\Acon\trans\slackeq\opt \\
        -\diag{\slackineq\opt}d\vc{G}(\coef\opt,v\opt) \\
        -d\Acon(\coef\opt,v\opt)\opt + d\bcon
    \end{bmatrix},
\end{split}\end{align}
We can then solve \eqref{eq:differential_of_opt_conds} for the Jacobian of $v\opt$ with respect to any entry of the zonotope centers or generators by setting the right-hand side appropriately (see \cite{amos2017optnet} for details).
That is, we can now differentiate \eqref{eq:constraint_loss} with respect to the elements of $\ctr_1$, $\Gen_1$, $\ctr_2$, or $\Gen_2$.
In practice, we differentiate \eqref{prog:emptiness_check} automatically using the \texttt{cvxpylayers} library \cite{agrawal2019differentiable}.
\section{Numerical Example}\label{sec:numerical_example}

We test our method by training a 2-layer feedforward ReLU network with input dimension 2, hidden layer size of 10, and output dimension of 2.
We chose this network with only one ReLU nonlinearity layer, as recent results have shown that a shallow ReLU network performs similarly to a deep ReLU network with the same amount of neurons \cite{hanin2019deep}.
However, note that our method (in particular Proposition \ref{prop:conzono_relu_activation}) does generalize to deeper networks.
We pose this preliminary example as a first effort towards this novel style of training.

\textit{Problem Setup.}
We seek to approximate the function
\begin{align}
    f(x_1, x_2) &= \begin{bmatrix}
            x_1^2 + \sin{x_2}\\
            x_2^2 + \sin{x_1}
        \end{bmatrix},
\end{align}
with $\inset = \conzono{\zeros_{2\times1},\eye_2,\emptyset,\emptyset}$.
We create an unsafe set in the output space as
\begin{align}
    \usset = \conzono{\begin{bmatrix}1.5\\1.5\end{bmatrix},0.5\eye_2,\emptyset,\emptyset}.
\end{align}
The training dataset was generated as $\ndata = 10^4$ random input/output pairs $(\xv\idxo_j, \yv_j)$ with $\yv_j = f(\xv\idxo_j)$ by sampling uniformly in $\inset$.
The unsafe set $\usset$ and each $\xv\idxo_j$ and $\yv_j$ are plotted in Figs. \ref{fig:Unconstrained_Training} and \ref{fig:Constrained_Training}.
The objective loss is
\begin{align}
    \Lobj{\cdot} = \dfrac{1}{\ndata}\sum_{j=1}^{\ndata}\norm{\nn{\xv\idxo_j} - \yv_j}_2^2
\end{align}
where we use $\cdot$ for concision in place of the training data.


\textit{Implementation.}
We implemented our method\footnote{Our code is available online: \url{https://github.com/Stanford-NavLab/constrained-nn-training}} in PyTorch \cite{paszke2019pytorch} with \texttt{optim.SGD} as our optimizer on a desktop computer with 6 cores, 32 GB RAM, and an RTX 2060 GPU.

We trained the network for $10^3$ iterations with and without the constraints enforced.
To enforce hard constraints as in  \eqref{prog:training_problem}, in each iteration, we compute the objective loss function across the entire dataset, then backpropagate the objective gradient; then, we compute the constraint loss as in \eqref{eq:constraint_loss} for all active constraints, then backpropagate.
For future work we will apply more sophisticated constrained optimization techniques (e.g., an active set method) \cite[Ch. 15]{nocedal2006numerical}.

With our current na\"ive implementation, constrained training took approximately 5 hours, whereas the unconstrained training took 0.5 s.
Our method is slower due to the need to compute an exponentially-growing number of constrained zonotopes as in Proposition \ref{prop:conzono_relu_activation}.
However, we notice that the GPU utilization is only 1-5\% (the reachability propagation is not fully parallelized), indicating significant room for increased parallelization and speed.

\textit{Results and Discussion.}
Results for unconstrained and constrained training are shown in Fig. \ref{fig:Unconstrained_Training} and Fig. \ref{fig:Constrained_Training}.
Our proposed method avoids the unsafe set.
Note the output constrained zonotopes (computed for both networks) contain the colored output points, verifying our exact set representation in Proposition \ref{prop:conzono_relu_activation}.

Table \ref{tab:Quan_Results} shows results for unconstrained and constrained training; importantly, our method obeys the constraints.
As expected for nonlinear constrained optimization, the network converged to a local minimum while obeying the constraints.
The key challenge is that the constrained training is several orders of magnitude slower than unconstrained training.
We plan to address in future work by increased parallelization, by pruning of our reachable sets \cite{tran2020nnv}, and by using anytime verification techniques \cite{vincent2020reachable}.

\begin{figure}
    \centering
    \includegraphics[width=0.8\columnwidth]{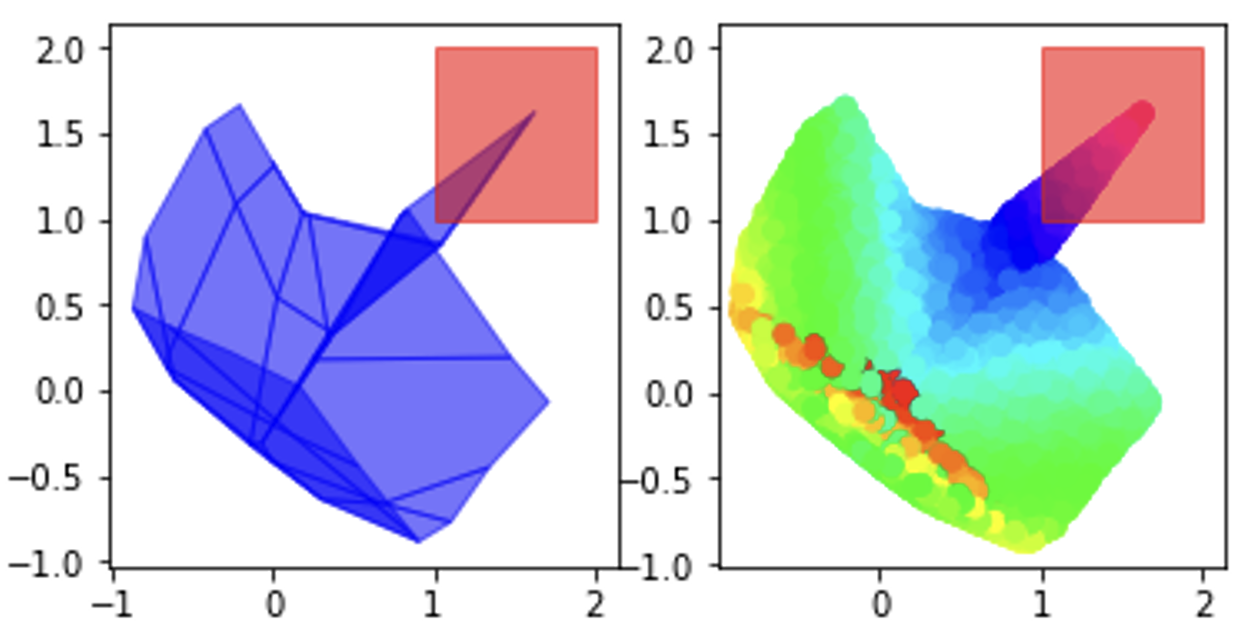}
    \caption{Unconstrained training, with $\outset$ plotted on the left and each $\nn{\xv\idxo_j}$ plotted on the right. The output approximates the function well but does not avoid the unsafe space.}
    \label{fig:Unconstrained_Training}
\end{figure}

\begin{figure}
    \centering
    \includegraphics[width=0.8\columnwidth]{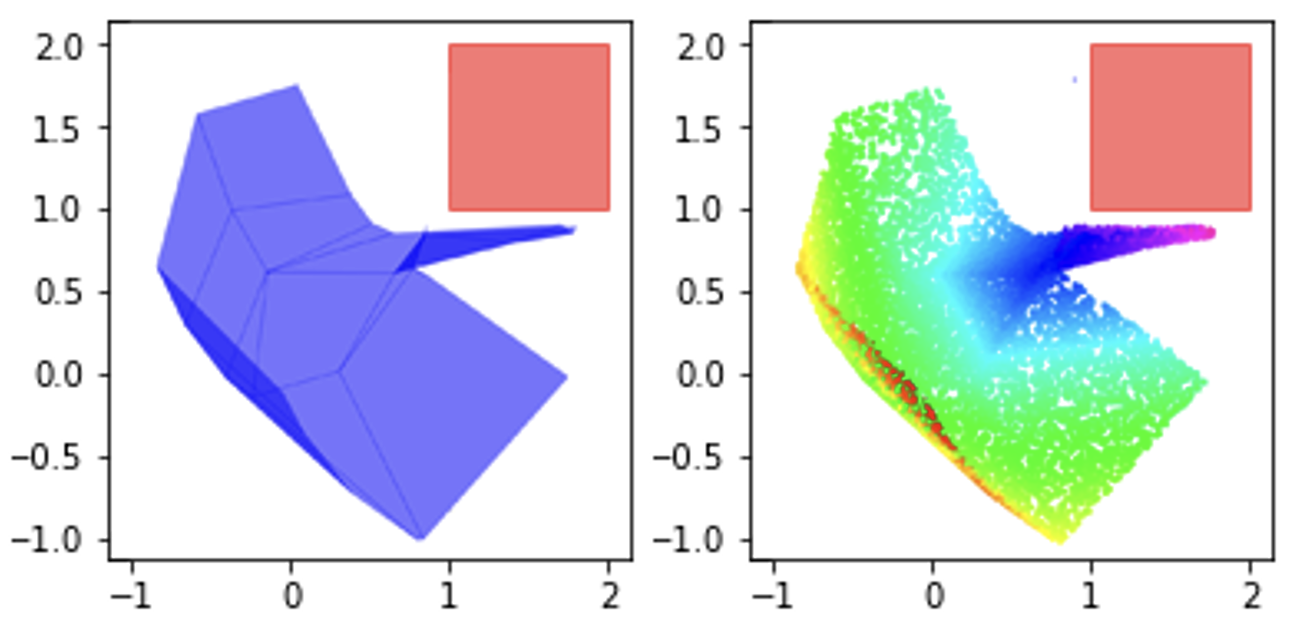}
    \caption{Constrained training. 
    The output approximates the function while avoiding the unsafe space.}
    \label{fig:Constrained_Training}
    \vspace*{-0.5cm}
\end{figure}

\begin{table}[ht]
    \centering
    \begin{tabular}{c|c c c}
     & Unconstrained & Constrained \\
     \hline
     final objective loss & \textbf{0.0039} & 0.0127 \\
     final constraint loss & 0.0575 &\textbf{ 0.0000}
    \end{tabular}
    \caption{}
    \label{tab:Quan_Results}
    \vspace*{-0.5cm}
\end{table}

\section{Conclusion and Future Work}\label{sec:conclusion}

This work proposes a constrained training method for feedforward ReLU neural networks.
We demonstrated the method successfully on a small example of nonlinear function approximation.
Given the ability to enforce output constraints, the technique can potentially be applied to offline training for safety-critical neural networks.

Our current implementation has several drawbacks to be addressed in future work.
First, the method suffers an exponential blowup of constrained zonotopes through a ReLU.
We hope to improve the forward pass step by using techniques such as \cite{vincent2020reachable} instead of layer-by-layer evaluation to compute the output set, and by conservatively estimating the reachable set similar to \cite{rubies2019fast}.
We also plan to apply the method on larger networks, such as for autonomous driving in \cite{tran2020nnv} or the ACAS Xu network \cite{kochenderfer2011robust, kochenderfer2012next, kochenderfer2015optimized} for aircraft collision avoidance.
In general, our goal is to train robust controllers where the output of a neural network must obey actuator limits and obstacle avoidance (for which the network output is passed through dynamics).

\renewcommand{\bibfont}{\normalfont\footnotesize}
{\renewcommand{\markboth}[2]{}
\printbibliography}

\end{document}